\documentclass[letterpaper, 10pt, conference]{ieeeconf}      

\IEEEoverridecommandlockouts                              

\usepackage{cite}
\usepackage{amsmath,amssymb,amsfonts}
\usepackage{graphicx}
\usepackage{censor}

\newcommand{\censorNormal}[1]{%
    \begingroup
    \censor{#1}%
    \endgroup}

\newcommand{\censorFootnote}[1]{%
    \begingroup
    \setlength{\censorruleheight}{1.5ex}%
    \setlength{\censorruledepth}{0ex}%
    \blackout{#1}%
    \endgroup}

\StopCensoring

\usepackage{mathtools}
\usepackage{booktabs}
\usepackage{siunitx}
\usepackage{csquotes}
\MakeOuterQuote{"} 
\usepackage{placeins}

\newtheorem{theorem}{Theorem}
\newtheorem{remark}{Remark}
\newtheorem{corollary}{Corollary}

\newtheorem{definition}{Definition}
\newtheorem{example}{Example}

\newcommand{\changed}[1]{\textcolor{black}{#1}}
\newcommand{\PS}[1]{\textcolor{black}{#1}}

\newcommand{\Wedge}[1]{\left(#1\right)_{\times}}

\newcommand{\reals}{\mathbb{R}}
\newcommand{\SO}[1]{SO(#1)}

\newcommand{\SE}[3]{SE_{#2}^{#3}(#1)}
\newcommand{\se}[3]{\mathfrak{se}_{#2}^{#3}(#1)}

\newcommand{\bfzero}{\mathbf{0}}

\newcommand{\Ad}[1]{\mathbf{Ad}_{#1}}

\newcommand{\F}[1]{\mathcal{F}\mskip-5mu_{\scriptstyle #1}}
\newcommand{\rot}[2]{\bfR_{#1}^{\scriptstyle #2}}
\newcommand{\hrot}[2]{\hat{\bfR}_{#1}^{\scriptstyle #2}}
\newcommand{\inp}[2]{\bfu_{#1}^{\scriptstyle #2}}

\newcommand{\sigvec}[2]{\prescript{\vphantom{|}\scriptstyle #2}{}{\mathrlap{\bovarsigma}\hphantom{\bovarsigma}}^{\vphantom{|}\scriptstyle #1}}
\newcommand{\rhovec}[2]{\prescript{\vphantom{|}\scriptstyle #2}{}{\mathrlap{\borho}\hphantom{\bovarsigma}}^{\vphantom{|}\scriptstyle #1}}
\newcommand{\thetavec}[2]{\prescript{\vphantom{|}\scriptstyle #2}{}{\mathrlap{\botheta}\hphantom{\bovarsigma}}^{\vphantom{|}\scriptstyle #1}}

\newcommand{\w}[2]{\bfw_{#1}^{\scriptstyle #2}}
\newcommand{\n}[3]{\prescript{\vphantom{|}\scriptstyle #3}{\vphantom{#1}}{\bfn}_{#1}^{\vphantom{|}\scriptstyle #2}}
\newcommand{\pos}[3]{\prescript{\vphantom{|}\scriptstyle #3}{\vphantom{#1}}{\bfp}_{#1}^{\vphantom{|}\scriptstyle #2}}
\newcommand{\vel}[3]{\prescript{\vphantom{|}\scriptstyle #3}{\vphantom{#1}}{\bfv}_{#1}^{\vphantom{|}\scriptstyle #2}}
\newcommand{\hpos}[3]{\prescript{\vphantom{|}\scriptstyle #3}{\vphantom{#1}}{\hat{\bfp}}_{#1}^{\vphantom{|}\scriptstyle #2}}
\newcommand{\hvel}[3]{\prescript{\vphantom{|}\scriptstyle #3}{\vphantom{#1}}{\hat{\bfv}}_{#1}^{\vphantom{|}\scriptstyle #2}}
\newcommand{\acc}[3]{\prescript{\vphantom{|}\scriptstyle #3}{\vphantom{#1}}{\bfa}_{#1}^{\vphantom{|}\scriptstyle #2}}

\newcommand{\gyr}[3]{\prescript{\vphantom{|}\scriptstyle #3}{\vphantom{#1}}{\boomega}_{#1}^{\vphantom{|}\scriptstyle #2}}

\newcommand{\extpose}[3]{\prescript{\vphantom{|}\scriptstyle #3}{\vphantom{#1}}{\bochi}_{#1}^{\vphantom{|}\scriptstyle #2}}
\newcommand{\hextpose}[3]{\prescript{\vphantom{|}\scriptstyle #3}{\vphantom{#1}}{\hat{\bochi}}_{#1}^{\vphantom{|}\scriptstyle #2}}
\newcommand{\Q}[2]{\bfQ_{#1}^{\scriptstyle #2}}
\newcommand{\N}[2]{\bfN_{#1}^{\scriptstyle #2}}
\newcommand{\g}[1]{\prescript{\vphantom{|}\scriptstyle #1}{\vphantom{|}}{\bfg}}
\newcommand{\X}[3]{\prescript{\vphantom{|}\scriptstyle #3}{\vphantom{#1}}{\bfX}_{#1}^{\vphantom{|}\scriptstyle #2}}
\newcommand{\hX}[3]{\prescript{\vphantom{|}\scriptstyle #3}{\vphantom{#1}}{\hat{\bfX}}_{#1}^{\vphantom{|}\scriptstyle #2}}
\newcommand{\G}[3]{\prescript{\vphantom{|}\scriptstyle #3}{\vphantom{#1}}{\bfG}_{#1}^{\vphantom{|}\scriptstyle #2}}

\newcommand{\Ome}[3]{\prescript{\vphantom{|}\scriptstyle #3}{\vphantom{#1}}{\boOmega}_{#1}^{\vphantom{|}\scriptstyle #2}}

\DeclareMathOperator{\blkdiag}{blkdiag}

\newcommand{\bfa}{\mathbf{a}}

\newcommand{\bfb}{\mathbf{b}}

\newcommand{\bfd}{\mathbf{d}}

\newcommand{\bfe}{\mathbf{e}}
\newcommand{\bbfe}{\bar{\mathbf{e}}}

\newcommand{\bff}{\mathbf{f}}
\newcommand{\bbff}{\bar{\mathbf{f}}}

\newcommand{\bfG}{\mathbf{G}}

\newcommand{\bfg}{\mathbf{g}}

\newcommand{\bfI}{\mathbf{I}}

\newcommand{\calJ}{\mathcal{J}}

\newcommand{\calL}{\mathcal{L}}

\newcommand{\bfN}{\mathbf{N}}

\newcommand{\calN}{\mathcal{N}}

\newcommand{\bfn}{\mathbf{n}}

\newcommand{\bfp}{\mathbf{p}}

\newcommand{\bfQ}{\mathbf{Q}}

\newcommand{\bfR}{\mathbf{R}}
\newcommand{\bfX}{\mathbf{X}}

\newcommand{\bfu}{\mathbf{u}}

\newcommand{\bfv}{\mathbf{v}}

\newcommand{\bfw}{\mathbf{w}}

\newcommand{\bfx}{\mathbf{x}}

\newcommand{\bfy}{\mathbf{y}}

\newcommand{\bochi}{\protect\raisebox{1pt}{$\boldsymbol{\chi}$}}

\newcommand{\boGamma}{\boldsymbol{\Gamma}}

\newcommand{\boomega}{\boldsymbol{\omega}}

\newcommand{\boOmega}{\boldsymbol{\Omega}}
\newcommand{\bomu}{\boldsymbol{\mu}}

\newcommand{\bonu}{\boldsymbol{\nu}}
\newcommand{\bophi}{\boldsymbol{\phi}}

\newcommand{\boPhi}{\boldsymbol{\Phi}}
\newcommand{\borho}{\boldsymbol{\rho}}

\newcommand{\bovarsigma}{\boldsymbol{\varsigma}}

\newcommand{\botheta}{\boldsymbol{\theta}}

\newcommand{\boUpsilon}{\boldsymbol{\Upsilon}}

\newcommand{\boxi}{\boldsymbol{\xi}}

\newcommand{\bboxi}{\bar{\boxi}}

\newcommand{\boLambda}{\boldsymbol{\Lambda}}

\newcommand{\EKF}{\textcolor{black}{EKF}}
\newcommand{\IEKF}{\textcolor{black}{IEKF}}

\newcommand{\IIEKF}{\textcolor{black}{IterIEKF}}
\newcommand{\IterEKF}{\textcolor{black}{IterEKF}}

\usepackage{xcolor}
\definecolor{mplC0}{HTML}{1F77B4}
\definecolor{mplC1}{HTML}{FF7F0E}
\definecolor{mplC2}{HTML}{2CA02C}

\begin{document}

\title{\LARGE \bf
Invariant Kalman filtering for extended pose estimation \\in multi-IMU articulated rigid-body systems
}

\author{
    \censorNormal{Sven Goffin}$^1$, 
    \censorNormal{Cédric Schwartz}$^2$,
    \censorNormal{Silvère Bonnabel}$^3$, 
    \censorNormal{Olivier Brüls}$^4$, 
    and 
    \censorNormal{Pierre Sacré}$^1$
    \thanks{\censorFootnote{S.\ Goffin} is a \censorFootnote{FRIA grantee of the Fonds de la Recherche Scientifique - FNRS}.}
    \thanks{$^{1}$\censorFootnote{S.\ Goffin} and \censorFootnote{P.\ Sacré} are with the \censorFootnote{Department of Electrical Engineering and Computer Science, University of Liège, Belgium (sven.goffin@uliege.be; p.sacre@uliege.be)}.}
    \thanks{$^{2}$\censorFootnote{C.\ Schwartz} is with the \censorFootnote{Department of Physical Activity and Rehabilitation Sciences, University of Liège, Belgium (Cedric.Schwartz@uliege.be)}.}
    \thanks{$^{3}$\censorFootnote{S.\ Bonnabel} is with the \censorFootnote{Department of Mathematics and Systems, Mines Paris -- PSL, France (silvere.bonnabel@mines-paristech.fr)}.}
    \thanks{$^{4}$\censorFootnote{O.\ Brüls} is with the \censorFootnote{Department of Aerospace and Mechanical Engineering, University of Liège, Belgium (o.bruls@uliege.be)}.} 
}
\date{}
\maketitle


\begin{abstract}
    Accurate extended pose estimation (orientation, velocity, and position) for IMU-instrumented articulated rigid-body systems is a key challenge in  robotics and human motion analysis. 
    The invariant extended Kalman filter (\IEKF{}) addresses this problem for a single rigid body with convergence guarantees and consistency under unobservability, but extending these properties to articulated systems is nontrivial: inter-body pose coupling prevents a direct application, and incorporating joint kinematic constraints within the invariant framework remains an open problem. 
    To address this gap, we introduce the relative $L$-extended pose, a Lie group representation for kinematic-tree systems \changed{based on relative poses between bodies}. With one IMU per body, it yields group-affine dynamics and allows joint constraints to be expressed in invariant form. We incorporate these constraints as noise-free pseudo-measurements within an iterated \IEKF{} (\IIEKF{}), thereby preserving the convergence and consistency guarantees of invariant filtering. 
    \PS{The absolute pose of each body is then recovered by chaining the relative blocks along the kinematic tree.}
    Validated on both a UR5e robot and a human leg, the proposed \IIEKF{} outperforms all \EKF{}, \IterEKF{}, and absolute-pose \IIEKF{} baselines. It converges faster, exhibits lower run-to-run variability, and consistently achieves the lowest RMSE, with reductions of at least $50\%$ compared to the second-best filter across all scenarios considered in this work.
\end{abstract}

\section{Introduction}
\label{sec:intro}

Wearable inertial measurement units (IMUs) have become
the sensors of choice for motion tracking in robotics and human movement analysis
\cite{field2011human}. Unlike vision-based systems, which require bulky external
infrastructure, have a limited capture volume, and are sensitive to
occlusions, IMUs are small and self-contained, enabling motion
capture in constrained or open environments. As a result, IMU-based pose
estimation has found broad applications in exoskeletons \cite{xavier2023multi},
legged robots \cite{bloesch2013state, rotella2014state, lin2006sensor}, and
human motion analysis \cite{filippeschi2017survey, roetenberg2009xsens,
baldi2019upper}.

Stochastic filtering is the standard framework for IMU-based pose estimation,
with the extended Kalman filter (\EKF{})~\cite{Kalman1960new} and its variants
being the most widely used methods for articulated systems
\cite{pastorino2013state}. These methods, however, do not exploit the Lie group
structure of rigid-body motion. Geometric filtering addresses this limitation
\cite{bonnabel2005invariant, mahony2008nonlinear, mahony2021equivariant, van2019geometric, mahony2021homogeneous}. In particular,
the invariant extended Kalman filter (\IEKF{}) \cite{barrau2016invariant,
barrau2018invariant} achieves convergence guarantees and consistency under
unobservability for single-body extended pose (orientation, velocity, position)
estimation \cite{barrau2018invariant, van2020invariant, hartley2020contact,
pavlasek2021invariant, wu2017invariant, barrau2022geometry}. Extending these
properties to articulated rigid-body systems remains, however, an open problem:
inter-body pose coupling prevents a direct application of the \IEKF{} to such
systems, and incorporating kinematic constraints while preserving its convergence
and consistency guarantees is nontrivial.

In this paper, we reformulate the problem of extended pose estimation for kinematic-tree articulated systems as an invariant filtering problem and we advocate the use of the \emph{relative $L$-extended pose} as a Lie group state representation for such systems equipped with $L$ IMUs (Fig.~\ref{fig:rigidbody_system} shows $L=2$). This representation yields group-affine dynamics and allows joint constraints to be expressed in invariant form, enabling their incorporation as noise-free pseudo-measurements within an iterated \IEKF{} (\IIEKF{}) \cite{goffin2025iterated}. The result is a principled extension of invariant filtering to articulated rigid-body systems, with the associated convergence and consistency guarantees. The proposed method is experimentally validated on two extended-pose estimation tasks: a UR5e pick-and-place sequence and a human-leg forward-lunge sequence. In both cases, the proposed \IIEKF{} outperforms an \IIEKF{} based on absolute body poses, as well as all \EKF{} and iterated \EKF{} (\IterEKF{}) baselines under either representation. 

\begin{figure}
    \centering
    \includegraphics[width=\linewidth]{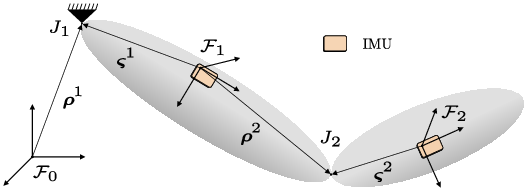}
    \caption{%
    Rigid-body system with a tree structure, composed of two movable bodies and a base fixed in $\F{0}$. An IMU is mounted on each movable body. 
    Our goal is to estimate the extended pose (position, velocity, orientation) of the full articulated system, from \PS{IMU measurements and a calibrated kinematic model, without relying on online external pose measurements}.}
    \label{fig:rigidbody_system}
\end{figure}

The contributions of this paper are: (i) the introduction of the relative $L$-extended pose, a Lie group state representation for multi-IMU articulated rigid-body systems, which admits group-affine dynamics and invariant joint constraints; (ii) an \IIEKF{} formulation based on the proposed representation that incorporates spherical and hinge joint constraints as noise-free pseudo-measurements; and (iii) an experimental evaluation on a robotic arm and a human-leg sequence, validating the approach across diverse articulated systems.

\section{Problem statement}
\label{sec:problem_statement}

We consider an articulated system of $L$ rigid bodies arranged in a kinematic tree, with one rigidly mounted IMU per body (Fig.~\ref{fig:rigidbody_system} shows $L=2$). Bodies and joints follow Featherstone's indexing \cite{featherstone2008rigid}: body $0$ is the fixed base, anchored in the inertial frame $\F{0}$, and frame $\F{i}$ denotes the frame of body $i$, which we take to coincide with its IMU frame without loss of generality. For joint $J_j$, $p(j)$ and $s(j)$ are the predecessor and successor body indices, and $\rhovec{j}{}, \sigvec{j}{} \in \reals^3$ are the vectors from $\F{p(j)}$ and $\F{s(j)}$ to $J_j$. 
For any vector or matrix quantity, the bottom-right index is the time step and the top-right superscript identifies the frames involved. If specified, the top-left superscript indicates the expression frame. This way, $\rot{k}{ij}\in\SO{3}$ is the rotation from $\F{j}$ to $\F{i}$ at time $k$, and $\vel{k}{ij}{l}, \pos{k}{ij}{l}\in\reals^3$ are the velocity and position of $\F{j}$ with respect to (w.r.t.) $\F{i}$, expressed in $\F{l}$.

Neglecting IMU biases and without motion priors, each body follows the 
inertial navigation dynamics
\begin{subequations}\label{eq:inertial_dyn}
    \begin{align}
        \rot{k+1}{0i} &= \rot{k}{0i}\exp_{\SO{3}}((\gyr{k}{i}{i} 
            + \w{k}{i,\boomega}) dt),\\
        \vel{k+1}{0i}{0} &= \vel{k}{0i}{0} + \left(\rot{k}{0i} 
            (\acc{k}{i}{i} + \w{k}{i,\bfa}) + \g{0}\right)dt,\\
        \pos{k+1}{0i}{0} &= \pos{k}{0i}{0} + \vel{k}{0i}{0} dt
            + \left(\rot{k}{0i} (\acc{k}{i}{i} + \w{k}{i,\bfa}) 
            + \g{0}\right)\frac{dt^2}{2},
    \end{align}
\end{subequations}
where $\gyr{k}{i}{i}, \acc{k}{i}{i} \in \reals^3$ are the IMU angular 
velocity and specific force in $\F{i}$, $\g{0}$ is gravity in $\F{0}$, 
$dt$ is the time step, and $\w{k}{i}=(\w{k}{i,\boomega}, 
\w{k}{i,\bfa})\sim\calN(\bfzero_{6\times 1}, \Q{k}{i})$ is the process 
noise. Joints couple the body poses through kinematic constraints; for 
the spherical joints in Fig.~\ref{fig:rigidbody_system}:
\begin{subequations}\label{eq:constraints}
    \begin{align}
        J_1:\quad \pos{k}{01}{0} + \rot{k}{01}\sigvec{1}{1} 
            &= \rhovec{1}{0}, \label{eq:constraint_J1}\\
        J_2:\quad \pos{k}{02}{0} + \rot{k}{02}\sigvec{2}{2} 
            &= \pos{k}{01}{0} + \rot{k}{01}\rhovec{2}{1}, 
            \label{eq:constraint_J2}
    \end{align}
\end{subequations}
where $\rhovec{j}{p(j)}, \sigvec{j}{s(j)}\in \reals^3$ with $j\in \{1,2\}$ are known from calibration. 
\PS{\textbf{Our goal is to estimate the extended pose of each body (orientation, velocity, position in the inertial frame $\F{0}$)  \changed{using IMU measurements and a calibrated kinematic model, without relying on online external pose measurements.}}}

Two state representations are commonly used to address this problem, each with a fundamental limitation. In the free-segment model, each  body has an independent pose w.r.t.~$\F{0}$ and joint constraints are imposed as stochastic pseudo-measurements. 
\changed{This approach cannot properly handle the noise-free constraint \eqref{eq:constraint_J2}, as it couples the poses of two distinct bodies. Enforcing this constraint as a noisy measurement in an \EKF{} update \cite{roetenberg2009xsens} leaves residual constraint violations \cite{goffin2023invariant}, while handling it via a particle filter \cite{zhang2011novel} incurs substantially higher computational cost.} The constrained-optimization approach \cite{kok2014optimization} shares the same difficulty.

The kinematic-tree model avoids this issue by combining the root pose w.r.t.\ $\F{0}$ with relative poses between adjacent segments in minimal coordinates, so that constraints are satisfied by construction. In \cite{bleser2015cognitive}, relative orientations are parameterized by Euler angles and, together with their first and second derivatives, are estimated with an \EKF{} using a constant angular-acceleration model and an IMU-based measurement model. \cite{miezal2013generic} applies the same idea using the Denavit-Hartenberg hinge-joint rotation angle. The resulting dynamics of such approach are often a poor approximation of real motion, and the measurement model is highly nonlinear, in such a way that repeated linearization about the current estimate can hinder convergence.

Those limitations ultimately stem from the choice of state representation. A Lie group representation of the state compatible with the invariant filtering framework would overcome the shortcomings of both the free-segment and kinematic-tree models. As we show in the next section, the relative $L$-extended pose provides such a representation.


\section{Modeling the extended pose of rigid-body systems within the invariant framework}
\label{sec:modelling_extpose}

The invariant filtering framework requires the state $\X{k}{}{}$ to evolve on a matrix Lie group and imposes two conditions on the system model \cite{barrau2016invariant}:
\begin{enumerate}
    \item \textit{Group-affine dynamics:} 
    \[\X{k+1}{}{} = \bff(\X{k}{}{}, \inp{k}{}, \w{k}{}),\] 
    where, up to first order in the process noise $\bfw_k \sim \calN(\bfzero, \Q{k}{})$, $\bff$ admits the factorization $\bff(\X{k}{}{},\bfu_k,\bfw_k) = \bbff(\X{k}{}{},\bfu_k)\bfg(\bfw_k,\bfu_k)$, with $\bfg(\bfzero,\bfu_k)=\bfI$ and
    \begin{equation}\label{eq:group_affine}
        \bbff(\bomu\bonu,\bfu_k) = 
            \bbff(\bomu,\bfu_k)\bbff(\bfI,\bfu_k)^{-1}\bbff(\bonu,\bfu_k),
    \end{equation}
    for all inputs $\inp{k}{}$ and all states $\bomu,\bonu$. Here, $\bfzero$ and $\bfI$ denote the zero vector and identity matrix of appropriate dimension.
    
    \item \textit{Measurements in invariant form:}
    \begin{equation}\label{eq:invariant_form}
        \bfy_k = \X{k}{}{}\bfd_k + \n{k}{}{} \quad \text{or} \quad 
        \bfy_k = \X{k}{-1}{}\bfd_k + \n{k}{}{},
    \end{equation}
    with $\n{k}{}{} \sim \calN(\bfzero, \N{k}{})$ and $\bfd_k$ a known column vector.
\end{enumerate}
The key contribution of this section is to show that the relative $L$-extended pose, defined below, satisfies both conditions for the articulated system described in Section~\ref{sec:problem_statement}, thereby casting the problem into the invariant filtering framework.

\subsection{\PS{The relative L-extended pose}}
\label{subsec:relpose}

In single-body pose estimation, the \IEKF{} represents the extended pose of body $i$ w.r.t.\ $\F{0}$, expressed in $\F{0}$, by the matrix $\extpose{k}{0i}{0} \in \SE{3}{2}{}$, where $\SE{3}{2}{}$ is the matrix Lie group of extended poses \cite{barrau2015non}:
\begin{equation}
    \extpose{k}{0i}{0} \coloneq
    \left[
    \begin{array}{c|c}
        \rot{k}{0i} & 
        \begin{matrix}
            \vel{k}{0i}{0} & \pos{k}{0i}{0}
        \end{matrix} \\
        \hline
        \bfzero_{2\times 3} 
        & \bfI_{2}
    \end{array}
    \right] \in \SE{3}{2}{},
\end{equation}
where $\bfI_N \in \reals^{N\times N}$ and $\bfzero_{N\times M} \in 
\reals^{N\times M}$ denote the identity and zero matrices. Using 
$\SE{3}{2}{}$ as a building block, we define the group of $L$-extended 
poses as follows.

\begin{definition}
    The Lie group of $L$-extended poses is the set of block-diagonal matrices in $\reals^{5L \times 5L}$ composed of $L$ individual elements of $\SE{3}{2}{}$:
    \begin{equation*}
        \SE{3}{2}{L} \coloneq \left\{  \blkdiag(\bochi_1, \dots, \bochi_L)\left| \begin{array}{c}
            \bochi_i \in \SE{3}{2}{}, \\
            i=1,\dots,L
        \end{array}\right.\right\}.
    \end{equation*}
    The exponential map and Lie algebra identification on $\SE{3}{2}{L}$ 
    are induced blockwise by those of $\SE{3}{2}{}$. Specifically, define the block-diagonal lifting operator 
    $(\cdot)^\boxdot$ for any function $\bff_{\mathrm{vec}} : \reals^n \rightarrow \reals^{N 
    \times N}$ or $\bff_{\mathrm{mat}} : \reals^{N\times N} \rightarrow 
    \reals^{N\times N}$ by
    \begin{align*}
        \bff_{\mathrm{vec}}^{\boxdot}(\bfx) &\coloneq 
            \blkdiag\left(\bff_{\mathrm{vec}}(\bfx_1),\dots,
            \bff_{\mathrm{vec}}(\bfx_L)\right),\\
        \bff_{\mathrm{mat}}^{\boxdot}(\bfX) &\coloneq 
            \blkdiag\left(\bff_{\mathrm{mat}}(\bfX_1),\dots,
            \bff_{\mathrm{mat}}(\bfX_L)\right),
    \end{align*}
    for any vector $\bfx = (\bfx_1,\dots,\bfx_L)\in\reals^{Ln}$ and matrix $\bfX = \blkdiag(\bfX_1,\dots,\bfX_L) \in  \reals^{LN\times LN}$. Then:
    \begin{equation*}
        \exp_{\SE{3}{2}{L}} \coloneq \exp_{\SE{3}{2}{}}^\boxdot, \quad
        \calL_{\se{3}{2}{L}} \coloneq \calL_{\se{3}{2}{}}^\boxdot.
    \end{equation*}
    See Appendix~\ref{appendix} for the definition of $\calL_{\se{3}{2}{}}$.
\end{definition} 


\changed{Although a block-diagonal state built from \emph{absolute} poses (i.e. the free-segment model) inherits group-affine dynamics blockwise \cite{brossard2021associating}, it prevents constraints coupling two bodies w.r.t.\ $\F{0}$ from being written in invariant form. This motivates working instead with relative poses.}

\begin{definition}[Relative $L$-extended pose]
    The relative $L$-extended pose of a rigid-body system arranged as a kinematic tree is the element $\X{k}{}{} \in \SE{3}{2}{L}$ given by
    \begin{equation*}
        \X{k}{}{} = \blkdiag\!\left(\extpose{k}{p(1)s(1)}{p(1)}, \dots, 
        \extpose{k}{p(L)s(L)}{p(L)}\right),
    \end{equation*}
    where the $j^{\mathrm{th}}$ diagonal block $(\X{k}{}{})_j = \extpose{k}{p(j)s(j)}{p(j)}$ is the extended pose of $\F{s(j)}$ w.r.t.\ $\F{p(j)}$, expressed in $\F{p(j)}$.
\end{definition}

This definition combines the structural simplicity of relative coordinates with the geometric consistency of $\SE{3}{2}{L}$. 


{\color{black}
\begin{example}[Two-link system]\label{ex:twolink_def}
        For the system of Fig.~\ref{fig:rigidbody_system}, with $L=2$, Featherstone's indexing gives $p(1)=0$, $s(1)=1$ and $p(2)=1$, $s(2)=2$. Its relative $2$-extended pose is
    \begin{equation*}
        \X{k}{}{} = \blkdiag\!\left(\extpose{k}{01}{0}, \extpose{k}{12}{1}\right),
    \end{equation*}
    where $\extpose{k}{01}{0}$ is the extended pose of $\F{1}$ w.r.t.\ the fixed base $\F{0}$, and $\extpose{k}{12}{1}$ is the extended pose of $\F{2}$ w.r.t.\ $\F{1}$, expressed in $\F{1}$.
    The absolute poses are recovered by chaining the relative blocks along the corresponding paths in the kinematic tree:
    \begin{equation}\label{eq:abs_recovery}
        \extpose{k}{01}{0} = (\X{k}{}{})_1, \qquad
        \extpose{k}{02}{0} = (\X{k}{}{})_1 (\X{k}{}{})_2.
    \end{equation}
\end{example}

}

\subsection{\changed{The relative L-extended pose has group-affine dynamics }}
\label{subsec:dynamics}

{
\color{black}
We first derive the two-link relative dynamics explicitly and introduce the compact notation used in the general case. Theorem~\ref{thm:dyn} then generalizes this derivation to an arbitrary kinematic tree. Throughout, relative and absolute poses are related by
\begin{equation}\label{eq:rel_extpose}
    \extpose{k}{p(j)s(j)}{p(j)} =
    \left(\extpose{k}{0p(j)}{0}\right)^{-1}
    \left(\extpose{k}{0s(j)}{0}\right).
\end{equation}

\begin{example}[Two-link dynamics]\label{ex:twolink_dyn}
    Continuing Example~\ref{ex:twolink_def}, the dynamics of $\extpose{k}{01}{0}$ are given by \eqref{eq:inertial_dyn} and are known to be group-affine \cite{barrau2015non}. Substituting \eqref{eq:inertial_dyn} into \eqref{eq:rel_extpose} yields
    \begin{subequations}\label{eq:rel_dyn}
        \begin{align}
            \rot{k+1}{12}    &= \Ome{k}{1}{1} \rot{k}{12} \left(\Ome{k}{2}{2}\right)^T,\\
            \vel{k+1}{12}{1} &= \Ome{k}{1}{1} \left( \vel{k}{12}{1} + \acc{k}{rel}{1} dt\right),\\
            \pos{k+1}{12}{1} &= \Ome{k}{1}{1} \left( \pos{k}{12}{1} + \vel{k}{12}{1} dt + \acc{k}{rel}{1} \frac{dt^2}{2} \right),
        \end{align}
    \end{subequations}
    where $\Ome{k}{i}{i}= \exp_{\SO{3}}\left(-(\gyr{k}{i}{i} + \w{k}{i,\boomega}) dt\right)$ and $\acc{k}{rel}{1} = \rot{k}{12}(\acc{k}{2}{2} + \w{k}{2,\bfa}) - (\acc{k}{1}{1} + \w{k}{1,\bfa})$.
    The expanded form \eqref{eq:rel_dyn} is explicit but cumbersome. Following \cite{brossard2021associating}, \eqref{eq:inertial_dyn} admits the following first-order approximation in the IMU noise:
    \begin{equation}\label{eq:ext_pose_dyn}
        \extpose{k+1}{01}{0}
        = \boGamma\boPhi(\extpose{k}{01}{0})\boUpsilon(\inp{k}{1})
        \exp_{\SE{3}{2}{}}(\G{k}{1}{} \w{k}{1}),
    \end{equation}
    where $\inp{k}{1}=(\gyr{k}{1}{1}, \acc{k}{1}{1})$ and $\boPhi$ is an automorphism. The matrices $\boGamma$, $\boPhi(\extpose{k}{01}{0})$, $\boUpsilon(\inp{k}{1})\in\reals^{5\times 5}$, and $\G{k}{1}{}\in\reals^{9\times 6}$ are given in Appendix~\ref{appendix}. Substituting \eqref{eq:ext_pose_dyn} into \eqref{eq:rel_extpose} gives, up to first order in the IMU noise,
    \begin{align}\label{eq:rel_ext_pose_dyn}
        \extpose{k+1}{12}{1}
        &= \Delta \exp_{\SE{3}{2}{}}\left(\G{k}{2}{} \w{k}{2} - \Ad{\Delta^{-1}}\G{k}{1}{} \w{k}{1}\right),
    \end{align}
    with $\Delta = \boUpsilon(\inp{k}{1})^{-1}\boPhi(\extpose{k}{12}{1})\boUpsilon(\inp{k}{2})$. Here, $\Ad{\Delta^{-1}}$ denotes the adjoint of $\Delta^{-1}$ defined in Appendix~\ref{appendix}, and the automorphism property $\boPhi(\bomu)^{-1}\boPhi(\bonu) = \boPhi(\bomu^{-1}\bonu)$ was used.
\end{example}
 
The compact forms \eqref{eq:ext_pose_dyn} and \eqref{eq:rel_ext_pose_dyn} can be unified by treating $\extpose{k}{01}{0}$ as the relative pose between $\F{0}$ and $\F{1}$ and assigning a noise-free IMU to $\F{0}$. Then, $\w{k}{0} = \bfzero_{6\times 1}$, $\gyr{k}{0}{0} = \bfzero_{3\times 1}$, and $\acc{k}{0}{0} = -\g{0}$, so $\boUpsilon(\inp{k}{0})^{-1}$ reduces to $\boGamma$ and \eqref{eq:rel_ext_pose_dyn} takes the same form as \eqref{eq:ext_pose_dyn}. The following theorem can therefore describe all bodies, including the root, with a single formula.
}

\begin{theorem}\label{thm:dyn}
    The relative $L$-extended pose of a kinematic-tree rigid-body system admits the following first-order dynamics approximation w.r.t. the process noise $\bfw_k$:
    \begin{equation*}
        \X{k+1}{}{} = \boLambda(\X{k}{}{}, \inp{k}{p}, \inp{k}{s})\exp_{\SE{3}{2}{L}}(\bfG_k \bfw_k),
    \end{equation*}
    where
    \begin{subequations}
        \begin{align*}
            \boLambda(\X{k}{}{}, \inp{k}{p}, \inp{k}{s}) &= \boUpsilon^\boxdot(\inp{k}{p})^{-1}\boPhi^\boxdot(\X{k}{}{})\boUpsilon^\boxdot(\inp{k}{s}),\\
            \inp{k}{p} &= \left( \inp{k}{p(1)}, \dots, \inp{k}{p(L)} \right),\\
            \inp{k}{s} &= \left( \inp{k}{s(1)}, \dots, \inp{k}{s(L)} \right),\\
            \w{k}{} &= \left( \w{k}{1}, \dots, \w{k}{L} \right),
        \end{align*}
    \end{subequations}
    and the $j^{\mathrm{th}}$ sub-vector of size $9$ in $\G{k}{}{} \w{k}{}$ is given by 
    \begin{align*}
        (\G{k}{}{} \w{k}{})_j = \G{k}{s(j)}{}\w{k}{s(j)}
        - \Ad{\Delta_j^{-1}}\G{k}{p(j)}{}\w{k}{p(j)},
    \end{align*}
    with $\Delta_j = \boUpsilon(\inp{k}{p(j)})^{-1}\boPhi(\extpose{k}{p(j)s(j)}{p(j)})\boUpsilon(\inp{k}{s(j)})$.
    Moreover, the function $\boLambda(\cdot, \inp{k}{p}, \inp{k}{s})$ satisfies group-affine property \eqref{eq:group_affine}.
\end{theorem}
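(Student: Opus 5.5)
The proof works blockwise: every object in the statement (exponential, $\boPhi^\boxdot$, $\boUpsilon^\boxdot$) is block-diagonal, so both the dynamics and the group-affine property reduce to statements about a single $\SE{3}{2}{}$ block indexed by joint $j$. The plan is to extend the two-link computation of Example~\ref{ex:twolink_dyn} to an arbitrary joint $j$, and then to check \eqref{eq:group_affine} block by block.

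\emph{Dynamics.} Fix a joint $j$. For the parent and the child we write \eqref{eq:ext_pose_dyn} for the absolute poses $\extpose{k}{0p(j)}{0}$ and $\extpose{k}{0s(j)}{0}$. This is valid for every body $i\ge1$ by \cite{brossard2021associating}. For $p(j)=0$ we use the convention stated before the theorem, with a noise-free virtual IMU: $\extpose{k}{00}{0}=\bfI$, $\w{k}{0}=\bfzero$, $\gyr{k}{0}{0}=\bfzero$, $\acc{k}{0}{0}=-\g{0}$. Then $\boUpsilon(\inp{k}{0})^{-1}=\boGamma$, and the formula holds trivially for body $0$. We substitute both expressions into \eqref{eq:rel_extpose}; the factor $\boGamma$ cancels from $\left(\boGamma\,\cdot\right)^{-1}\boGamma$. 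The automorphism property then gives
\[
\extpose{k+1}{p(j)s(j)}{p(j)} = \exp(-\G{k}{p(j)}{}\w{k}{p(j)})\,\Delta_j\,\exp(\G{k}{s(j)}{}\w{k}{s(j)}),
\]
with $\Delta_j$ as in the statement. Next we move the left exponential through $\Delta_j$ using $\exp(\boxi)\Delta_j=\Delta_j\exp(\Ad{\Delta_j^{-1}}\boxi)$. Finally we merge the two exponentials via BCH to first order in the noise, which yields exactly the stated $(\G{k}{}{}\w{k}{})_j$. Stacking the blocks with the $\boxdot$ lifting gives the matrix form of $\boLambda$ and $\exp_{\SE{3}{2}{L}}$. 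Because the blocks of $\bfw_k$ are shared across joints (a body appears as parent in several joints), $\bfG_k$ is a general, not block-diagonal, matrix. The first-order statement is unaffected by this, since only linearity in $\bfw_k$ is used.

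\emph{Group-affine property.} Write $A=\boUpsilon^\boxdot(\inp{k}{p})^{-1}$ and $B=\boUpsilon^\boxdot(\inp{k}{s})$, which do not depend on the state. Then $\boLambda(\bfX)=A\boPhi^\boxdot(\bfX)B$ and $\boLambda(\bfI)=AB$, since $\boPhi^\boxdot(\bfI)=\bfI$. Hence
\[
\boLambda(\bomu)\boLambda(\bfI)^{-1}\boLambda(\bonu)=A\boPhi^\boxdot(\bomu)BB^{-1}A^{-1}A\boPhi^\boxdot(\bonu)B = A\boPhi^\boxdot(\bomu\bonu)B=\boLambda(\bomu\bonu).
\]
This uses that $\boPhi^\boxdot$ is a group homomorphism on $\SE{3}{2}{L}$, which holds because $\boPhi$ is one on each block and block-diagonal products act blockwise. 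The same computation also shows that the noise factor $\bfg=\exp(\bfG_k\bfw_k)$ equals $\bfI$ at zero noise, as required.

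\emph{Main obstacle.} I expect the main obstacle to be the first-order bookkeeping in the dynamics step. Two points need care. First, the approximation \eqref{eq:ext_pose_dyn} for each body must compose correctly under inversion and multiplication, with second-order cross terms between the parent and child noises legitimately dropped. Second, the adjoint must be taken with respect to $\Delta_j$ and not with respect to the true propagated state. The latter discrepancy is itself first order in the noise and multiplies a noise term, so it too is negligible at first order.
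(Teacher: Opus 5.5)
Your proposal is correct and follows essentially the same route as the paper's proof. Both substitute the single-body dynamics into the relative-pose identity and use the automorphism property of $\boPhi$ to form $\Delta_j$. Both then move the parent noise through $\Delta_j$ via the adjoint, merge the two exponentials by first-order BCH, and stack the blocks. Finally, both verify the group-affine property from $\boPhi^\boxdot(\bomu\bonu)=\boPhi^\boxdot(\bomu)\boPhi^\boxdot(\bonu)$ and $\boPhi^\boxdot(\bfI)=\bfI$.

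You also make explicit two points the paper leaves implicit: the cancellation of $\boGamma$, and the virtual-IMU treatment of $p(j)=0$. Your second stated worry is moot: the adjoint step $\exp(\boxi)\Delta_j=\Delta_j\exp(\Ad{\Delta_j^{-1}}\boxi)$ is exact, so the only approximation in the argument is the first-order BCH merge.
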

\PS{The proof is provided in Appendix~\ref{app:proof_th1}.}



\subsection{\changed{The relative L-extended pose yields joint constraints in invariant form}}
\label{subsec:constraints}

\begin{table*}[tp]
    \centering
    \medskip
    \caption{Spherical and hinge joint constraints expressed using the relative $L$-extended pose. The vectors $\rhovec{j}{p(j)}$, $\sigvec{j}{s(j)}$, $\thetavec{j}{p(j)}$, $\thetavec{j}{s(j)} \in \reals^3$ are known from calibration. The element $(\X{k}{}{})_j \in \SE{3}{2}{}$ denotes the $j^{\mathrm{th}}$ diagonal block of $\X{k}{}{} \in \SE{3}{2}{L}$.}
    \begin{tabular}{cl}
        \toprule
        \textbf{Joint type} &  \multicolumn{1}{c}{\textbf{Constraints in invariant form}}\\

        \midrule

        \begin{tabular}{c}
             Spherical\\
             \includegraphics[width=\columnwidth]{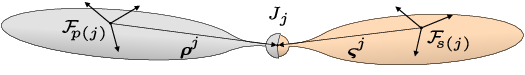}
        \end{tabular} 
        &
        \begin{tabular}{l}
             $\left(\X{k}{}{}\right)_j \begin{bmatrix} \sigvec{j}{s(j)}\\ 0\\ 1 \end{bmatrix} = \begin{bmatrix} \rhovec{j}{p(j)}\\ 0\\ 1 \end{bmatrix}$
        \end{tabular}\\[0.cm]
        
        \hline\\
        
        \begin{tabular}{c}
             Hinge\\
             \includegraphics[width=\columnwidth]{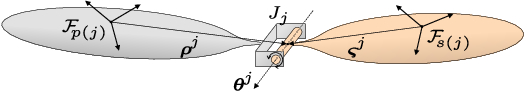}
        \end{tabular} 
        &
        \begin{tabular}{l}
            $\left(\X{k}{}{}\right)_j \begin{bmatrix} \sigvec{j}{s(j)}\\ 0\\ 1 \end{bmatrix} = \begin{bmatrix} \rhovec{j}{p(j)}\\ 0\\ 1 \end{bmatrix}$\\[0.4cm]
            $\left(\X{k}{}{}\right)_j \begin{bmatrix} \thetavec{j}{s(j)}\\ 0\\ 0 \end{bmatrix} = \begin{bmatrix} \thetavec{j}{p(j)}\\ 0\\ 0 \end{bmatrix}$
        \end{tabular}\\[1cm]    
        \bottomrule
    \end{tabular}
    \label{tab:joint_constraints}
\end{table*}

{
\color{black}

The relative $L$-extended pose removes the coupling that prevents \eqref{eq:constraint_J2} from being expressed in invariant form under the free-segment model, as illustrated by the two-link case.
\begin{example}[Two-link constraint]\label{ex:twolink_constraint}
    Left-multiplying Equation~\eqref{eq:constraint_J2} by $(\rot{k}{01})^T$ and substituting \eqref{eq:rel_extpose} with $j=2$ yields
    \begin{equation*}
        \rot{k}{12}\sigvec{2}{2} + \pos{k}{12}{1} = \rhovec{2}{1},
    \end{equation*}
    which depends only on the block $(\X{k}{}{})_2 = \extpose{k}{12}{1}$. It can be written as $(\X{k}{}{})_2 \bfd = \bfy$, where $\bfd = (\sigvec{2}{2}, 0, 1)$ and $\bfy = (\rhovec{2}{1}, 0, 1)$, yielding a measurement in invariant form. This result is formalized in the following theorem.
\end{example}
 
}

\begin{theorem}\label{thm:abs_constraints}
    Let the constraint enforced by joint $J_j$ between bodies $p(j)$ and $s(j)$ be given by
    \begin{equation}\label{eq:general_constraint}
        \bophi_k\left(\extpose{k}{0p(j)}{0}, \extpose{k}{0s(j)}{0}\right) = \bfzero.
    \end{equation}
    If $\bophi_k$ takes the form
    \begin{align}\label{eq:cons_inv_form_abs}
        \bophi_k\left(\extpose{k}{0p(j)}{0}, \extpose{k}{0s(j)}{0}\right) 
        & = \rot{k}{0p(j)}\bfa_k - \rot{k}{0s(j)}\bfb_k \notag \\
        & \quad + \alpha_k\left(\vel{k}{0p(j)}{0} - \vel{k}{0s(j)}{0}\right) \notag \\
        & \quad + \beta_k\left(\pos{k}{0p(j)}{0} - \pos{k}{0s(j)}{0}\right),
    \end{align}
    where $\bfa_k,\bfb_k\in\reals^3$ and $\alpha_k,\beta_k\in\reals$ are known and independent of the body poses ($\extpose{k}{0p(j)}{0}$, $\extpose{k}{0s(j)}{0}$, and $\extpose{k}{p(j)s(j)}{p(j)}$), then the constraint admits the invariant form based on $(\X{k}{}{})_j \in \SE{3}{2}{}$:
    \begin{align}\label{eq:inv_constraint}
        \begin{split}
            (\X{k}{}{})_j \bfd_k = \bfy_k,
        \end{split}
    \end{align}
    where $\bfd_k = (\bfb_k, \alpha_k, \beta_k)$ and $\bfy_k = (\bfa_k, \alpha_k, \beta_k)$.
\end{theorem}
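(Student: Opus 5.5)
The plan is to left-multiply the constraint by $(\rot{k}{0p(j)})^T$, which is invertible, and then substitute the relative-pose relation \eqref{eq:rel_extpose}, exactly as in Example~\ref{ex:twolink_constraint}. From \eqref{eq:rel_extpose} and the inverse formula in $\SE{3}{2}{}$, the block $(\X{k}{}{})_j=\extpose{k}{p(j)s(j)}{p(j)}$ has the following components:
\begin{align*}
\rot{k}{p(j)s(j)} &= (\rot{k}{0p(j)})^T\rot{k}{0s(j)},\\
\vel{k}{p(j)s(j)}{p(j)} &= (\rot{k}{0p(j)})^T\bigl(\vel{k}{0s(j)}{0}-\vel{k}{0p(j)}{0}\bigr),\\
\pos{k}{p(j)s(j)}{p(j)} &= (\rot{k}{0p(j)})^T\bigl(\pos{k}{0s(j)}{0}-\pos{k}{0p(j)}{0}\bigr).
\end{align*}
I would first record these identities, which follow from a direct block computation of $(\extpose{k}{0p(j)}{0})^{-1}\extpose{k}{0s(j)}{0}$.

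Next, I would multiply \eqref{eq:cons_inv_form_abs} on the left by $(\rot{k}{0p(j)})^T$. Since $\bfa_k,\bfb_k,\alpha_k,\beta_k$ do not depend on the poses, the scalars commute with the rotation. The constraint $\bophi_k=\bfzero$ then becomes
\[
\bfa_k-\rot{k}{p(j)s(j)}\bfb_k-\alpha_k\vel{k}{p(j)s(j)}{p(j)}-\beta_k\pos{k}{p(j)s(j)}{p(j)}=\bfzero,
\]
which is equivalent to $\rot{k}{p(j)s(j)}\bfb_k+\alpha_k\vel{k}{p(j)s(j)}{p(j)}+\beta_k\pos{k}{p(j)s(j)}{p(j)}=\bfa_k$. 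The equivalence holds in both directions because the rotation is invertible.

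Finally, I would compute $(\X{k}{}{})_j\bfd_k$ with $\bfd_k=(\bfb_k,\alpha_k,\beta_k)$ from the block structure of $\SE{3}{2}{}$. The first three rows give exactly the left-hand side above. The last two rows give $(\alpha_k,\beta_k)$ because of the $\bfI_2$ block, so the result equals $\bfy_k=(\bfa_k,\alpha_k,\beta_k)$. This is the form $\bfy_k=\X{k}{}{}\bfd_k$ of \eqref{eq:invariant_form}, restricted to block $j$ by padding $\bfd_k$ with zeros on the other blocks.

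There is no real obstacle. The only point needing care is that $\bfd_k$ must be known, that is, independent of the state. This is exactly why the hypothesis requires $\bfa_k,\bfb_k,\alpha_k,\beta_k$ to be independent of the absolute and relative poses: it is what keeps the pseudo-measurement in invariant form and not merely state-dependent.
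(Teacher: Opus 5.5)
Your proposal is correct and follows the paper's proof: left-multiply the constraint by $(\rot{k}{0p(j)})^T$, substitute the relative-pose relation \eqref{eq:rel_extpose}, and use the $\bfI_2$ block of $(\X{k}{}{})_j$ to append the identities $\alpha_k=\alpha_k$ and $\beta_k=\beta_k$. You add two details the paper leaves implicit: the explicit block components of $(\extpose{k}{0p(j)}{0})^{-1}\extpose{k}{0s(j)}{0}$, and the observation that the invertibility of the rotation makes the two forms of the constraint equivalent.
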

\PS{The proof is provided in Appendix~\ref{app:proof_th2}.}

It may be preferable to express a constraint directly in terms of the relative pose of the connected bodies. The next corollary states the counterpart of Theorem~\ref{thm:abs_constraints} in this setting.

\begin{corollary}\label{cor:rel_constraints}
    Let the constraint enforced by joint $J_j$ between bodies $p(j)$ and $s(j)$ be given by
    \begin{equation}
        \bophi_k\left(\extpose{k}{p(j)s(j)}{p(j)}\right) = \bfzero.
    \end{equation}
    It admits the form \eqref{eq:inv_constraint} if $\bophi_k$ can be expressed as
    \begin{align}\label{eq:cons_inv_form_rel}
        &\bophi_k\left(\extpose{k}{p(j)s(j)}{p(j)}\right) = \rot{k}{p(j)s(j)}\bfb_k \notag\\
        &\qquad  + \alpha_k \vel{k}{p(j)s(j)}{p(j)} + \beta_k \pos{k}{p(j)s(j)}{p(j)} - \bfa_k,
    \end{align}
    where $\bfa_k,\bfb_k\in\reals^3$ and $\alpha_k,\beta_k\in\reals$ are known and independent of
    $\extpose{k}{p(j)s(j)}{p(j)}$.
\end{corollary}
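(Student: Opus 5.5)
The plan is to verify the corollary by a direct block-matrix computation. No reduction to absolute poses is needed, although I will also point out that it follows from Theorem~\ref{thm:abs_constraints}.

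\emph{Step 1: compute the block action.} Write $(\X{k}{}{})_j = \extpose{k}{p(j)s(j)}{p(j)}$ with rotation block $\rot{k}{p(j)s(j)}$ and translational columns $\vel{k}{p(j)s(j)}{p(j)}$, $\pos{k}{p(j)s(j)}{p(j)}$. For $\bfd_k = (\bfb_k,\alpha_k,\beta_k)\in\reals^5$, multiplying out the $\SE{3}{2}{}$ block structure gives
\begin{equation*}
(\X{k}{}{})_j\bfd_k = \begin{bmatrix} \rot{k}{p(j)s(j)}\bfb_k + \alpha_k\vel{k}{p(j)s(j)}{p(j)} + \beta_k\pos{k}{p(j)s(j)}{p(j)} \\ \alpha_k \\ \beta_k\end{bmatrix}.
\end{equation*}

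\emph{Step 2: compare with the constraint.} The first three components equal $\bophi_k(\extpose{k}{p(j)s(j)}{p(j)}) + \bfa_k$ by \eqref{eq:cons_inv_form_rel}. Hence $\bophi_k = \bfzero$ holds if and only if $(\X{k}{}{})_j\bfd_k = (\bfa_k,\alpha_k,\beta_k) = \bfy_k$. This is exactly \eqref{eq:inv_constraint}.

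\emph{Step 3: check that the form is genuinely invariant.} Since $\bfa_k,\bfb_k,\alpha_k,\beta_k$ are independent of the relative pose, $\bfd_k$ is a known vector, so \eqref{eq:inv_constraint} matches the left-invariant type in \eqref{eq:invariant_form} with zero noise.

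Alternatively, the result can be deduced from Theorem~\ref{thm:abs_constraints}. One substitutes \eqref{eq:rel_extpose}: the relative rotation is $\rot{k}{p(j)s(j)} = (\rot{k}{0p(j)})^T\rot{k}{0s(j)}$, and the relative velocity and position are $(\rot{k}{0p(j)})^T$ applied to the absolute differences. One then multiplies by $-\rot{k}{0p(j)}$ to recover the form \eqref{eq:cons_inv_form_abs}. This route, however, requires the pose-independence of the coefficients with respect to the absolute poses, which is why the direct computation is preferable.

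There is no real obstacle here. The only point to check carefully is that the last two rows reproduce $\alpha_k$ and $\beta_k$ exactly, so that the pseudo-measurement is consistent. This holds because the bottom block of the $\SE{3}{2}{}$ matrix is $[\bfzero_{2\times3}\ \bfI_2]$.
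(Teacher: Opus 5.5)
Your proof is correct and matches the paper's reasoning. The paper gives no separate proof of the corollary: it is exactly the final step of the proof of Theorem~\ref{thm:abs_constraints} (Appendix~\ref{app:proof_th2}), where the relative-form constraint is turned into $(\X{k}{}{})_j\bfd_k=\bfy_k$ by appending the identities $\alpha_k=\alpha_k$ and $\beta_k=\beta_k$, and your block multiplication makes that step explicit.
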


Both spherical-joint constraints in \eqref{eq:constraints} match \eqref{eq:cons_inv_form_abs} and thus admit an invariant form under the relative $L$-extended pose. 
Table~\ref{tab:joint_constraints} reports the corresponding invariant form for spherical joints, as well as for hinge joints.

\begin{remark}
    In order to impose non-holonomic constraints, $\bfa_k$, $\bfb_k$, $\alpha_k$, and $\beta_k$ may depend on the gyroscope measurements $\gyr{k}{p(j)}{p(j)}$ and $\gyr{k}{s(j)}{s(j)}$, in which case the pseudo-measurement \eqref{eq:inv_constraint} should include an additional noise term to account for gyroscope noise.
\end{remark}

\section{Experiments}\label{sec:experiments}

We evaluate an \IIEKF{}, an \EKF{}, and an \IterEKF{} on two real-world extended-pose estimation tasks: a UR5e robot performing pick-and-place motions and a human leg during a forward-lunge exercise. For each filter, the state is represented either as the relative $L$-extended pose or as the collection of individual body poses w.r.t. $\F{0}$ (free-segment model), referred to as the relative and absolute representations, respectively. Quantities expressed in the absolute representation are denoted with a bar.


The \IIEKF{} is formulated using the left-invariant error:
\begin{align}
    (\boxi_{k \mid l})_j &= \log_{\SE{3}{2}{}}\left( (\hX{k\mid l}{}{})_j^{-1} (\X{k}{}{})_j\right), \label{eq:IEKF_err_rel}\\
    (\bboxi_{k \mid l})_j &= \log_{\SE{3}{2}{}}\left( (\hextpose{k\mid l}{0s(j)}{0})^{-1} \;\extpose{k}{0s(j)}{0}\right), \label{eq:IEKF_err_abs}
\end{align}
while the \EKF{} and \IterEKF{} use the standard error
\begin{align}
    (\bfe_{k \mid l})_j 
    &= \begin{bmatrix}
            \log_{\SO{3}}\left( (\hrot{k \mid l}{p(j)s(j)})^T \rot{k}{p(j)s(j)}\right)\\
            \vel{k}{p(j)s(j)}{p(j)} - \hvel{k \mid l}{p(j)s(j)}{p(j)}\\
            \pos{k}{p(j)s(j)}{p(j)} - \hpos{k \mid l}{p(j)s(j)}{p(j)}
        \end{bmatrix}, \label{eq:EKF_err_rel}\\
    (\bbfe_{k \mid l})_j 
    &= \begin{bmatrix}
            \log_{\SO{3}}\left( (\hrot{k \mid l}{0s(j)})^T \rot{k}{0s(j)}\right)\\
            \vel{k}{0s(j)}{0} - \hvel{k \mid l}{0s(j)}{0}\\
            \pos{k}{0s(j)}{0} - \hpos{k \mid l}{0s(j)}{0}
        \end{bmatrix}. \label{eq:EKF_err_abs}
\end{align}
All error terms are modeled as zero-mean Gaussian. To ensure a fair comparison, all filters are initialized from the same state (expressed in either absolute or relative form), with covariances chosen so that the induced state distributions match to first order. The covariance matrices are further selected such that states drawn from the initial distribution satisfy the kinematic constraints up to first order. In all experiments, joint constraints are split into positional and rotational components and imposed as pseudo-measurements with small regularization noise to avoid numerical issues (see Section~IV in \cite{goffin2025iterated}). The corresponding standard deviations are given in Tables~\ref{tab:ur5e_params} and~\ref{tab:lamh_params}.

\subsection{UR5e experiment}
\label{subsec:ur5e}

The first task estimates the extended pose of a UR5e robot. The UR5e comprises a fixed base and six movable rigid bodies in a kinematic chain, connected by hinge joints. Four Awinda (Xsens) IMUs are mounted on the first four movable bodies. The last two bodies remain fixed relative to their parents (Fig.~\ref{fig:ur5e}).
\begin{figure}[bp]
    \centering
    \includegraphics[width=\columnwidth]{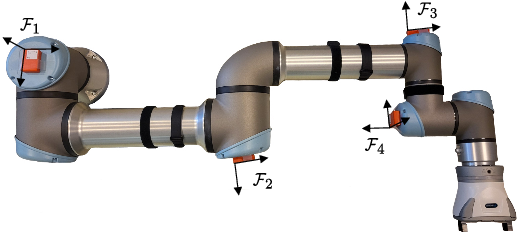}
    \caption{UR5e robot equipped with IMUs on its first four movable segments.}
    \label{fig:ur5e}
\end{figure}

Ground truth is obtained from joint-encoder measurements recorded during an arbitrary pick-and-place sequence. The vectors $\rhovec{j}{p(j)}$, $\sigvec{j}{s(j)}$, $\thetavec{j}{p(j)}$, and $\thetavec{j}{s(j)}$ (see Table~\ref{tab:joint_constraints}) are known from calibration. IMU biases are estimated offline from static data and subtracted from the measurements. 

All filters are initialized with the parameters in Table~\ref{tab:ur5e_params} and evaluated over $100$ trials on the same ground-truth trajectory, with initial states randomly sampled from the relative-representation \IIEKF{} distribution.

\begin{table}[ht]
    \centering
    \caption{Parameters for the UR5e experiment.}
    \begin{tabular}{l l}
        \toprule
        Parameter & Value \\
        \midrule
        Initial orientation error around joint axis (std.) &   $\pi/6 \, \si{\radian}$ \\
        Pos. constraint noise along each axis (std.) & $10^{-2.5} \, \si{\meter}$ \\
        Rot. constraint noise along each axis (std.) & $10^{-2.5} \, \si{\radian}$ \\
        Gyroscope noise along each axis (std.) & $10^{-2} \, \si{\radian\per\second}$ \\
        Accelerometer noise along each axis (std.) & $10^{-1} \, \si{\meter\per\second\squared}$ \\
        IMU frequency & $100 \, \si{\hertz}$\\
        \bottomrule
    \end{tabular}
    \label{tab:ur5e_params}
\end{table}

\begin{figure*}[tp]
    \centering
    \smallskip
    \includegraphics[width=\linewidth]{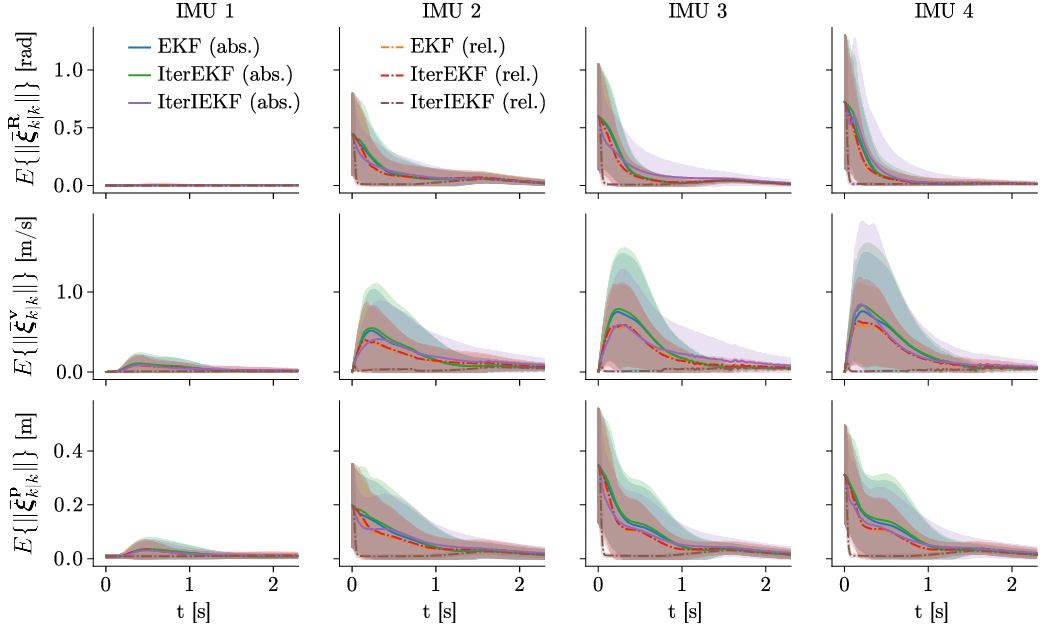}
    \caption{Mean and standard deviation of the orientation, velocity, and position error norms for each IMU over time, computed over $100$ runs of the UR5e experiment. The relative-representation \IIEKF{} (rel.) outperforms the absolute free-segment \IIEKF{} (abs.) and all \EKF{} and \IterEKF{} baselines under either extended pose representation, with negligible run-to-run variance.}
    \label{fig:ur5e_err_imu}
\end{figure*}

\changed{Since the base joint of the UR5e is aligned with gravity, its angle cannot be inferred from IMU measurements and joint constraints alone. Consequently, the rotation of IMU~1 about the gravity axis is unobservable. For all baselines, this component is therefore set offline to its ground-truth value at each time step, preventing errors in the unobservable state subspace from affecting the following comparisons.}

\changed{Figure~\ref{fig:ur5e_err_imu} reports the mean and one standard deviation of the rotation, velocity, and position error norms for the four IMUs, evaluated in the absolute representation. The orientation error of IMU~1 is identically zero due to the correction described above. The initial velocity error is zero for all filters, since the trajectory starts from rest.}

The relative-representation \IIEKF{} converges essentially immediately and with negligible run-to-run variability. All other filters converge markedly more slowly, exhibit a sharp transient increase in velocity error before gradually recovering, and show substantially larger dispersion. This indicates that the relative representation, not merely the invariant filter structure, is the decisive factor in performance.

Table~\ref{tab:ur5e_metrics} reports the root mean squared error (RMSE) for IMU orientation, velocity, and position, computed across trials, time, and IMUs. The relative-representation \IIEKF{} achieves the lowest RMSE in all three categories. In particular, it reduces RMSE by $50\%$ for orientation, $87.7\%$ for velocity, and $59.7\%$ for position compared to the relative-representation \EKF{}, which performs second best.

\begin{table}[ht]
    \centering
    \caption{RMSE of the rotation, velocity, and position components of the state for each tested filter in the UR5e experiment. Lowest values are indicated in bold. Errors are computed after correcting for the unobservable state components.}
    \begin{tabular}{lccc}
        \toprule
         & Rot. RMSE & Vel. RMSE & Pos. RMSE \\
         & [$\si{\radian}$] & [$\si{\meter \per \second}$] & [$\si{\meter}$] \\
        \midrule
        EKF (abs.)  & $0.108$ & $0.232$ & $0.068$\\
        \EKF{} (rel.)     & $0.096$ & $0.171$ & $0.057$\\
        \IterEKF{} (abs.) & $0.115$ & $0.246$ & $0.072$\\
        \IterEKF{} (rel.) & $0.100$ & $0.176$ & $0.059$\\
        IterIEKF (abs.)   & $0.119$ & $0.249$ & $0.063$\\
        \bf IterIEKF (rel.)   & $\mathbf{0.048}$ & $\mathbf{0.021}$ & $\mathbf{0.023}$\\
        \bottomrule
    \end{tabular}
    \label{tab:ur5e_metrics}
\end{table}

\subsection{Human-leg experiment}
\label{subsec:lamh}

The second task estimates the extended pose of a human leg during forward lunges. Two Trigno IMUs (Delsys) are mounted on the thigh and shank. Ground-truth IMU poses are obtained from a nine-camera Qualisys motion-capture system using reflective markers placed on rigid frames attached to each IMU. Additional markers on the malleoli, femoral epicondyles, and iliac spines are used to reconstruct joint-center trajectories, which serve as ground truth (Fig.~\ref{fig:lamh_setup}).

\begin{figure}
    \centering
    \includegraphics[width=\columnwidth]{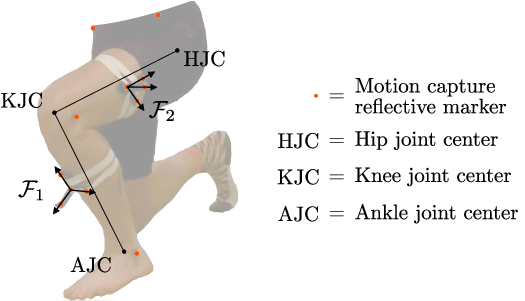}
    \caption{Human leg instrumented with IMUs on the shank and thigh. Each IMU carries a rigid marker frame to provide ground-truth IMU pose from motion capture. Eight additional markers on the malleoli, femoral epicondyles, and iliac spines provide ground-truth joint-center trajectories.}
    \label{fig:lamh_setup}
\end{figure}

The ankle and knee are modeled as hinge joints. Their centers are defined as the midpoints between the medial and lateral malleoli markers and between the medial and lateral epicondyle markers, respectively. Joint axes are estimated as the mean direction of the line joining the corresponding markers over a calibration sequence. The hip joint center is estimated using the Harrington regression from leg length and pelvic width and depth. The ankle joint center is assumed fixed in $\F{0}$, making the problem fully observable. We label the ankle and knee joints $J_1$ and $J_2$. The vectors $\rhovec{j}{p(j)}$, $\sigvec{j}{s(j)}$, $\thetavec{j}{p(j)}$, and $\thetavec{j}{s(j)}$ with $j\in\{1,2\}$ are computed from motion-capture ground truth and treated as constant. Soft-tissue artifacts induce small IMU-to-joint displacements that have limited impact on position constraints but affect rotation constraints more severely. Indeed, errors in the joint-axis direction are amplified by the joint-to-IMU lever arm, potentially leading to large position estimation errors. For this reason, the rotation-constraint regularization noise is set higher than in the UR5e experiment. IMU biases are estimated offline from ground truth and subtracted from the IMU outputs. All parameters are reported in Table~\ref{tab:lamh_params}.  We run $100$ estimations of the same trajectory with initial states sampled from the relative-representation \IIEKF{} distribution.

\begin{table}[ht]
    \centering
    \caption{Parameters for the human-leg experiment.}
    \begin{tabular}{ll}
        \toprule
        Parameter & Value \\
        \midrule
        Initial orientation error around joint axis (std.) & $\pi/3 \, \si{\radian}$ \\
        Pos. constraint noise along each axis (std.) & $10^{-2.5} \, \si{\meter}$ \\
        Rot. constraint noise along each axis (std.) & $10^{-0.5} \, \si{\radian}$ \\
        Gyroscope noise along each axis (std.) & $8\cdot10^{-3} \, \si{\radian\per\second}$ \\
        Accelerometer noise along each axis (std.) & $5\cdot10^{-2} \, \si{\meter\per\second\squared}$ \\
        IMU frequency & $200 \, \si{\hertz}$\\
        \bottomrule
    \end{tabular}
    \label{tab:lamh_params}
\end{table}

Figure~\ref{fig:lamh_err_imu} reports the mean and one standard deviation of the rotation, velocity, and position error norms for the two IMUs. The relative-representation \IIEKF{} converges quickly with negligible run-to-run variability across all state components. The other five alternatives converge more slowly, often exhibit an initial error peak, and show substantially higher variability. Overall, for each filter, the relative representation outperforms the absolute one.

\begin{figure}
    \centering
    \medskip
    \includegraphics[width=\columnwidth]{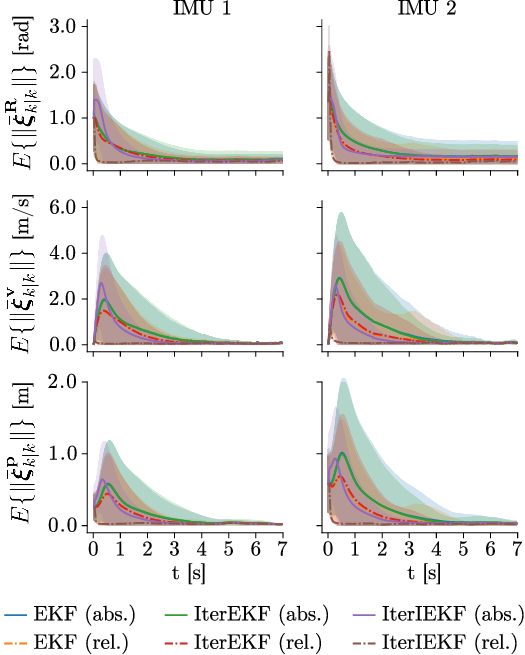}
    \caption{Mean and standard deviation of the orientation, velocity, and position error norms for each IMU, computed over $100$ runs of the human-leg experiment. The relative-representation \IIEKF{} (rel.) converges within a few time steps with negligible run-to-run variance, outperforming the absolute-representation \IIEKF{} (abs.) and all \EKF{} and \IterEKF{} baselines under either representation.}
    \label{fig:lamh_err_imu}
\end{figure}

Figure~\ref{fig:lamh_err_joints} shows the mean and standard deviation of the ankle (AJC), knee (KJC), and hip (HJC) joint-center position error norms. The same trend holds: the relative-representation \IIEKF{} performs best, converging quickly with minimal run-to-run variance. As expected, errors accumulate along the kinematic chain, increasing with distance from the ankle reference. 

After convergence, all filters retain small residual errors due to unmodeled soft-tissue artifacts and the hinge-joint assumption at the ankle and knee. For applications such as exoskeleton operation or robot programming-by-demonstration, these errors are negligible. However, more precise motion-estimation tasks will require explicitly accounting for these effects, which we leave for future work.

\begin{figure}
    \centering
    \smallskip
    \includegraphics[width=\columnwidth]{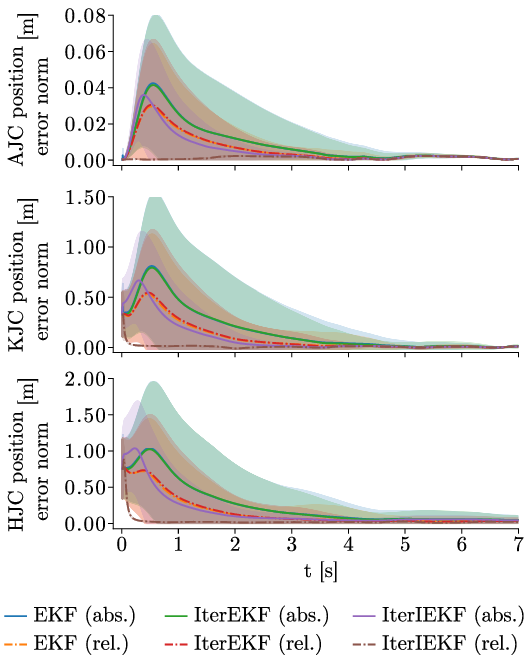}
    \caption{Mean and standard deviation of the ankle (AJC), knee (KJC), and hip (HJC) joint-center position error norms, computed over $100$ runs of the human-leg experiment. The relative-representation \IIEKF{} (rel.) converges within a few time steps and yields lower errors for all three joint centers than the absolute-representation \IIEKF{} (abs.) and all \EKF{} and \IterEKF{} baselines under either representation, with negligible run-to-run variance.}
    \label{fig:lamh_err_joints}
\end{figure}

Table~\ref{tab:lamh_metrics} presents the RMSE for IMU orientation, velocity, and position, averaged across trials, time, and IMUs. The results confirm the superiority of the relative-representation \IIEKF{}, achieving RMSE reductions of $60.7\%$, $92.4\%$, and $85.6\%$ for orientation, velocity, and position, respectively, compared to the second-best filter in each category.

\begin{table}[ht]
    \centering
    \caption{RMSE of the rotation, velocity, and position components of the state for each tested filter in the human-leg experiment. Lowest values are indicated in bold.}
    \begin{tabular}{lccc}
        \toprule
         & Rot. RMSE & Vel. RMSE & Pos. RMSE \\
         & [$\si{\radian}$] & [$\si{\meter \per \second}$] & [$\si{\meter}$] \\
        \midrule
        EKF (abs.)     & $0.289$ & $0.882$ & $0.290$\\
        \EKF{} (rel.)     & $0.230$ & $0.627$ & $0.194$\\
        \IterEKF{} (abs.) & $0.285$ & $0.860$ & $0.279$ \\
        \IterEKF{} (rel.) & $0.224$ & $0.629$ & $0.199$\\
        IterIEKF (abs.)   & $0.241$ & $0.545$ & $0.181$\\
        \bf IterIEKF (rel.)   & $\mathbf{0.088}$ & $\mathbf{0.041}$ & $\mathbf{0.026}$\\
        \bottomrule
    \end{tabular}
    \label{tab:lamh_metrics}
\end{table}


\section{Conclusion}\label{ref:conclusion}

This paper demonstrates that an appropriate Lie group representation is key to extending invariant filtering to articulated rigid-body systems. By working in relative coordinates and representing the extended pose of a kinematic-tree system as an element of $\SE{3}{2}{L}$, the filtering problem naturally acquires the group-affine dynamics and invariant measurement structure required by the \IEKF{} framework. Joint kinematic constraints, which are difficult to enforce exactly in standard formulations, are seamlessly incorporated as noise-free pseudo-measurements without sacrificing the convergence and consistency guarantees of invariant filtering. Experiments on a UR5e robot and a human leg show that the proposed \IIEKF{} consistently outperforms all \EKF{}, \IterEKF{}, and absolute-pose \IIEKF{} baselines, with faster convergence, markedly lower run-to-run variability, and better performance across both tasks and all state components. These results establish the relative $L$-extended pose as a principled foundation for IMU-based pose estimation in articulated systems. 
Future work will extend the framework to full-body human pose estimation for applications such as exoskeleton control and programming-by-demonstration. 
\changed{We will also evaluate it experimentally on spherical joints, extend it to cylindrical, prismatic, and planar joints. Finally, we will investigate strategies to compensate for soft-tissue artifacts and maintain performance under less controlled conditions.}

\appendices
\section{The Lie group of extended poses}\label{appendix}

The matrices involved in dynamics \eqref{eq:ext_pose_dyn} are given by
\begin{align}
    \boGamma &= \left[
    \begin{array}{c|c c}
        \bfI_{3} & \g{0} dt & \g{0} \frac{dt^2}{2}\\
        \hline
        \bfzero_{2\times 3} & \multicolumn{2}{c}{\bfI_{2}}
    \end{array}
    \right],\\
    \boPhi(\extpose{}{}{}) &= \left[
    \begin{array}{c|c c}
        \rot{}{} & \vel{}{}{} & \pos{}{}{} +  \vel{}{}{} dt\\
        \hline
        \bfzero_{2\times 3} & \multicolumn{2}{c}{\bfI_{2}}
    \end{array}
    \right],\\
    \boUpsilon(\inp{}{}) &= \left[
    \begin{array}{c|c c}
        \exp_{\SO{3}}(\gyr{}{}{} dt) & \acc{}{}{}dt & \acc{}{}{} \frac{dt^2}{2}\\
        \hline
        \bfzero_{2\times 3} & \multicolumn{2}{c}{\bfI_{2}}
    \end{array}
    \right],\\
    \G{k}{i}{} &= \begin{bmatrix}
        \calJ_{-\gyr{k}{i}{i} dt} & \bfzero_{3\times 3}\\
        \bfzero_{3\times 3} & \exp_{\SO{3}}(-\gyr{k}{i}{i} dt) dt\\
        \bfzero_{3\times 3} & \exp_{\SO{3}}(-\gyr{k}{i}{i} dt) \frac{dt^2}{2}
    \end{bmatrix}.
\end{align}
The Lie algebra of the group $\SE{3}{2}{}$, denoted by $\se{3}{2}{}$, is identified with $\reals^9$ via the following linear map:
\begin{equation}
    \calL_{\se{3}{2}{}}\left(\begin{bmatrix}
        \bophi\\
        \bonu\\
        \borho
    \end{bmatrix} \right) = \left[
        \begin{array}{c|c c}
            \Wedge{\bophi} & \bonu & \borho\\
            \hline
            \bfzero_{2\times 3} & \multicolumn{2}{c}{\bfzero_{2 \times 2}}
        \end{array}
        \right] \in \se{3}{2}{},
\end{equation}
where $\bophi, \bonu, \borho \in \reals^3$ and $\Wedge{\bophi}$ is the skew-symmetric matrix associated with the vector cross product in $\reals^3$.
Given $\bochi \in \SE{3}{2}{}$, the adjoint of $\bochi$ is defined as
\begin{equation}
    \Ad{\bochi} = \begin{bmatrix}
        \rot{}{} & \bfzero_{3\times 3} & \bfzero_{3\times 3}\\
        \Wedge{\vel{}{}{}}\rot{}{} & \rot{}{} & \bfzero_{3\times 3}\\
        \Wedge{\pos{}{}{}}\rot{}{} & \bfzero_{3\times 3} & \rot{}{}
    \end{bmatrix},
\end{equation}
such that $\bochi \exp(\boxi) \bochi^{-1} = \exp(\Ad{\bochi} \boxi)$ for all $\boxi \in \reals^9$.


\section{Proof of Theorem~\ref{thm:dyn}}\label{app:proof_th1}
\begin{proof}
    To simplify notation, set $\exp(\cdot)\coloneq \exp_{\SE{3}{2}{}}(\cdot)$.
    Substituting
    \eqref{eq:ext_pose_dyn} into the identity
    \begin{equation}
        \extpose{k}{p(j)s(j)}{p(j)} =
        \left(\extpose{k}{0p(j)}{0}\right)^{-1}
        \left(\extpose{k}{0s(j)}{0}\right),
    \end{equation}
    which follows from the frame-composition rule, yields:
    \begin{align*}
        \left(\X{k+1}{}{}\right)_j =& \exp(-\G{k}{p(j)}{}\w{k}{p(j)}) \boUpsilon(\inp{k}{p(j)})^{-1} \boPhi(\extpose{k}{0p(j)}{0})^{-1}\\
        & \quad \cdot \boPhi(\extpose{k}{0s(j)}{0})\boUpsilon(\inp{k}{s(j)})  \exp(\G{k}{s(j)}{}\w{k}{s(j)}),\\
        =& \exp(-\G{k}{p(j)}{}\w{k}{p(j)}) \boUpsilon(\inp{k}{p(j)})^{-1} \boPhi((\X{k}{}{})_j)\\
        & \quad \cdot \boUpsilon(\inp{k}{s(j)})\exp(\G{k}{s(j)}{}\w{k}{s(j)}),
    \end{align*}
    where we used the automorphism property $\boPhi(\bomu)^{-1}\boPhi(\bonu) = \boPhi(\bomu^{-1}\bonu)$.
    Defining $\Delta_j \coloneqq \boUpsilon(\inp{k}{p(j)})^{-1} \boPhi((\X{k}{}{})_j)\boUpsilon(\inp{k}{s(j)})$ and using the adjoint matrix $\Ad{\Delta_{j}^{-1}}$ detailed in Appendix~\ref{appendix} to move the left exponential to the right, it comes:
    \begin{align*}
        \left(\X{k+1}{}{}\right)_j=& \boUpsilon(\inp{k}{p(j)})^{-1} \boPhi((\X{k}{}{})_j)\boUpsilon(\inp{k}{s(j)})\\
        & \, \cdot \exp(- \Ad{\Delta_{j}^{-1}}\G{k}{p(j)}{}\w{k}{p(j)})  \exp(\G{k}{s(j)}{}\w{k}{s(j)}),\\
        =& \boUpsilon(\inp{k}{p(j)})^{-1} \boPhi((\X{k}{}{})_j)\boUpsilon(\inp{k}{s(j)})\\
        & \,\cdot \exp(\G{k}{s(j)}{}\w{k}{s(j)} - \Ad{\Delta_{j}^{-1}}\G{k}{p(j)}{}\w{k}{p(j)}),
    \end{align*}
    where the last line following from a first-order approximation of the Baker–Campbell–Hausdorff formula. 

    Assembling the $L$ blocks and building $\G{k}{}{}$ in such a way that $(\G{k}{}{}\w{k}{})_j = \G{k}{s(j)}{}\w{k}{s(j)} - \Ad{\Delta_j^{-1}} \G{k}{p(j)}{}\w{k}{p(j)}$ give:
    \begin{align*}
        \X{k+1}{}{} =& \boUpsilon^\boxdot(\inp{k}{p})^{-1} \boPhi^\boxdot(\X{k}{}{})\boUpsilon^\boxdot(\inp{k}{s}) \exp^\boxdot(\G{k}{}{}\w{k}{}),\\
        =& \boLambda(\X{k}{}{}, \inp{k}{p}, \inp{k}{s})\exp_{\SE{3}{2}{L}}(\bfG_k \bfw_k).
    \end{align*}
    The operator $(\cdot)^\boxdot$ preserves automorphisms, and $\boPhi^\boxdot$ satisfies $\boPhi^\boxdot(\bomu\bonu)=\boPhi^\boxdot(\bomu)\boPhi^\boxdot(\bonu)$ for all $\bomu,\bonu \in \SE{3}{2}{L}$, yielding
    \begin{align*}
        &\boLambda(\bomu\bonu, \inp{k}{p}, \inp{k}{s})\\
        &\quad = \boUpsilon^\boxdot(\inp{k}{p})^{-1} \boPhi^\boxdot(\bomu\bonu)\boUpsilon^\boxdot(\inp{k}{s}),\\
        &\quad = \boUpsilon^\boxdot(\inp{k}{p})^{-1} \boPhi^\boxdot(\bomu)\boPhi^\boxdot(\bonu)\boUpsilon^\boxdot(\inp{k}{s}),\\
        &\quad = \boUpsilon^\boxdot(\inp{k}{p})^{-1} \boPhi^\boxdot(\bomu)\boUpsilon^\boxdot(\inp{k}{s}) \boUpsilon^\boxdot(\inp{k}{s})^{-1}\\
        & \qquad\quad \cdot \boPhi^\boxdot(\bfI)^{-1}\boUpsilon^\boxdot(\inp{k}{p})\boUpsilon^\boxdot(\inp{k}{p})^{-1}\boPhi^\boxdot(\bonu)\boUpsilon^\boxdot(\inp{k}{s}),\\
        &\quad = \boLambda(\bomu, \inp{k}{p}, \inp{k}{s}) \boLambda(\bfI, \inp{k}{p}, \inp{k}{s})^{-1}\boLambda(\bonu, \inp{k}{p}, \inp{k}{s}),
    \end{align*}
    where we used $\boPhi^\boxdot(\bfI) = \bfI$. The function $\boLambda(\cdot, \inp{k}{p}, \inp{k}{s})$ thus satisfies \eqref{eq:group_affine}, proving that the dynamics are group-affine.
\end{proof}


\section{Proof of Theorem~\ref{thm:abs_constraints}}\label{app:proof_th2}
\begin{proof}
    Left-multiplying \eqref{eq:general_constraint} by $(\rot{k}{0p(j)})^T$, substituting~\eqref{eq:rel_extpose}, and rearranging terms yield
    \begin{align*}
        \rot{k}{p(j)s(j)}\bfb_k + \alpha_k \vel{k}{p(j)s(j)}{p(j)} + \beta_k \pos{k}{p(j)s(j)}{p(j)} = \bfa_k.
    \end{align*}
    Appending the identities $\alpha_k=\alpha_k$ and $\beta_k=\beta_k$ gives \eqref{eq:inv_constraint}.
\end{proof}

\bibliographystyle{IEEEtran}
\bibliography{root.bib}

\end{document}